\documentclass[letterpaper, 10 pt, conference, twocolumn]{ieeeconf}  

\IEEEoverridecommandlockouts                              

\overrideIEEEmargins                                      

\pdfminorversion=4
                                    
\usepackage[pdftex, pdfstartview={FitV}, pdfpagelayout={TwoColumnLeft},bookmarksopen=true,plainpages = false, colorlinks=true, linkcolor=black, citecolor = black, urlcolor = black,filecolor=black , pagebackref=false,hypertexnames=false, plainpages=false, pdfpagelabels ]{hyperref}

\usepackage[authormarkuptext=name,addedmarkup=bf,authormarkupposition=right]{changes}
\definechangesauthor[name={B.~L.}, color={blue}]{bl}



\makeatletter
\def\set@curr@file#1{%
  \begingroup
    \escapechar\m@ne
    \xdef\@curr@file{\expandafter\string\csname #1\endcsname}%
  \endgroup
}
\def\quote@name#1{"\quote@@name#1\@gobble""}
\def\quote@@name#1"{#1\quote@@name}
\def\unquote@name#1{\quote@@name#1\@gobble"}
\makeatother

\usepackage{graphicx}
\usepackage{epstopdf}
\usepackage{amssymb,amsmath,mathrsfs}

\usepackage{amsthm}
\usepackage[font=footnotesize]{subcaption}

\usepackage[font=footnotesize]{caption}
\usepackage{xcolor}
\usepackage{units}
\usepackage{algorithm}
\usepackage[noend]{algpseudocode}
\usepackage{balance}
\usepackage[sort,compress,noadjust]{cite}
\usepackage{tabularx}
\usepackage{nameref}



\usepackage{enumitem}

\theoremstyle{plain}
\newtheorem{theorem}{Theorem}
\theoremstyle{plain}
\newtheorem{proposition}{Proposition}
\theoremstyle{plain}

\theoremstyle{plain}

\theoremstyle{definition}
\newtheorem{assumption}{Assumption}
\theoremstyle{definition}

\theoremstyle{remark}
\newtheorem{remark}{Remark}

\usepackage[hang,flushmargin]{footmisc}

\usepackage[capitalize]{cleveref}
\crefformat{equation}{(#2#1#3)}
\Crefformat{equation}{Equation~(#2#1#3)}
\Crefname{equation}{Equation}{Eqs.}

\usepackage{accents}
\newcommand{\ubar}[1]{\underaccent{\bar}{#1}}

\title{\LARGE \bf
Sliding on Manifolds:\\
Geometric Attitude Control with Quaternions
}

\author{Brett T. Lopez and Jean-Jacques E. Slotine
\thanks{Nonlinear Systems Laboratory, Massachusetts Institute of Technology, Cambridge MA, {\tt\small \{btlopez,jjs\}@mit.edu}}
}

\begin{document}

\maketitle
\thispagestyle{empty}
\pagestyle{empty}


\begin{abstract}
This work proposes a quaternion-based sliding variable that describes exponentially convergent error dynamics for any forward complete desired attitude trajectory.
The proposed sliding variable directly operates on the non-Euclidean space formed by quaternions and explicitly handles the double covering property to enable global attitude tracking when used in feedback.
In-depth analysis of the sliding variable is provided and compared to others in the literature. 
Several feedback controllers including nonlinear PD, robust, and adaptive sliding control are then derived.
Simulation results of a rigid body with uncertain dynamics demonstrate the effectiveness and superiority of the approach.

\end{abstract}


\section{INTRODUCTION}
\label{sec:intro}

Attitude control is a fundamental problem in robotics and aerospace that has a rich history of research.
Several controllers have been proposed that vary in complexity, robustness, and tracking performance.
However, many lack strong performance guarantees in diverse flight conditions -- a facet that is becoming more critical as aerial and space robots find more real-world uses.
Orientation dynamics evolve in a non-Euclidean space which poses challenges for synthesizing attitude controllers since all analysis must be done on the appropriate state space manifold.
One natural manifold is that formed by the special orthogonal group $\mathrm{SO(3)}$ whose elements are all the orthogonal matrices with determinant 1.
Several $\mathrm{SO(3)}$ controllers have been proposed in the literature, most notably \cite{lee2010geometric,fernando2011robust,goodarzi2013geometric,lee2013nonlinear}, but, as shown in \cite{bhat2000topological}, control on $\mathrm{SO(3)}$ can at best achieve \emph{almost global stability}, i.e., only trajectories in an \emph{open set} converge to an equilibrium.
Further, the complexity of working in $\mathrm{SO(3)}$ can make performance and robustness analysis difficult.
This work will show that \emph{global} closed-loop stability is possible by carefully defining quaternion-based feedback terms that capture the underling topology of their non-Euclidean state space. 


An effective method to derive nonlinear controllers is through the use of so-called sliding variables.
Sliding variables often simplify closed-loop stability and tracking analysis by forming a hierarchy of simple reduced order systems \cite{slotine1991applied}.
Once a sliding variable is chosen, several robust and adaptive nonlinear controllers can be immediately employed (see \cite{slotine1991applied}).
However, defining a sliding variable can be difficult when a state space is non-Euclidean.
This is especially true for $\mathrm{SO(3)}$ as the error dynamics of rotation matrices are difficult to analyze and stability is limited to at best almost global.
The aforementioned challenges can be circumvented by instead using quaternions, which live on the three-sphere $\mathbb{S}^3$, to represent orientations.
Several works have synthesized attitude controllers using quaternions \cite{wie1985quaternion,mayhew2011quaternion,fresk2013full,girish2015nonlinear,liu2015quaternion}, and some have proposed quaternion-based sliding variables \cite{lo1995smooth,jan2004minimum,yeh2010sliding,sanchez2013time,zou2017nonlinear}.
However, works that use sliding variables either 1) treat quaternions as if they evolve in Euclidean space and ignore the topology of $\mathbb{S}^3$ or 2) do not address the double covering property where the so-called unwinding phenomenon \cite{mayhew2011quaternion} can lead to longer-than-necessary attitude maneuvers.

\begin{figure}[t!]
    \begin{subfigure}{.31\columnwidth}
         \centering
         \includegraphics[trim=320 10 320 10, clip, width=1\linewidth]{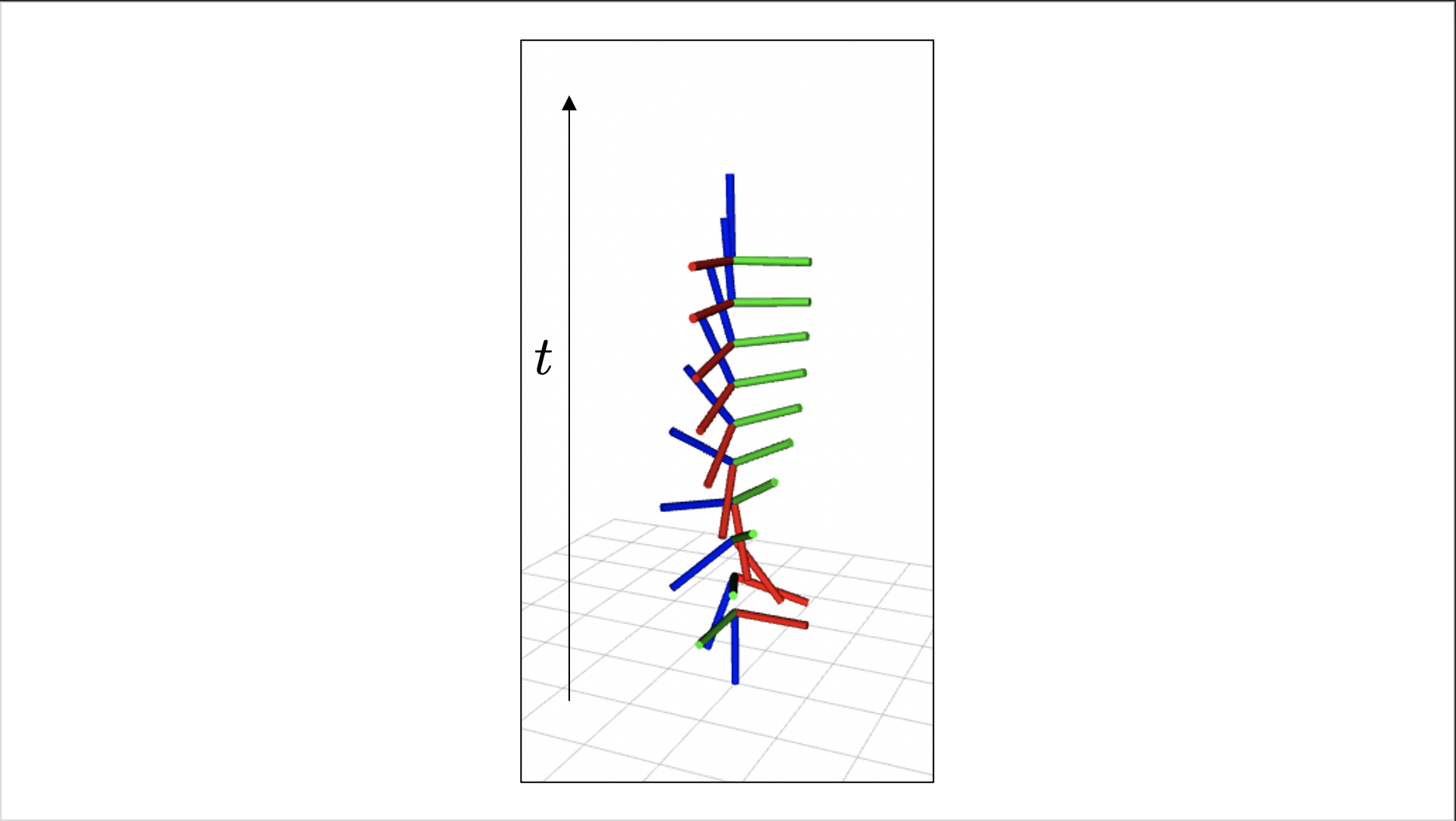}
         \caption{Proposed method.}
         \label{fig:ours}
     \end{subfigure}
    \hspace{0.1em}
     \begin{subfigure}{.31\columnwidth}
         \centering
         \includegraphics[trim=320 10 320 10, clip, width=1\linewidth]{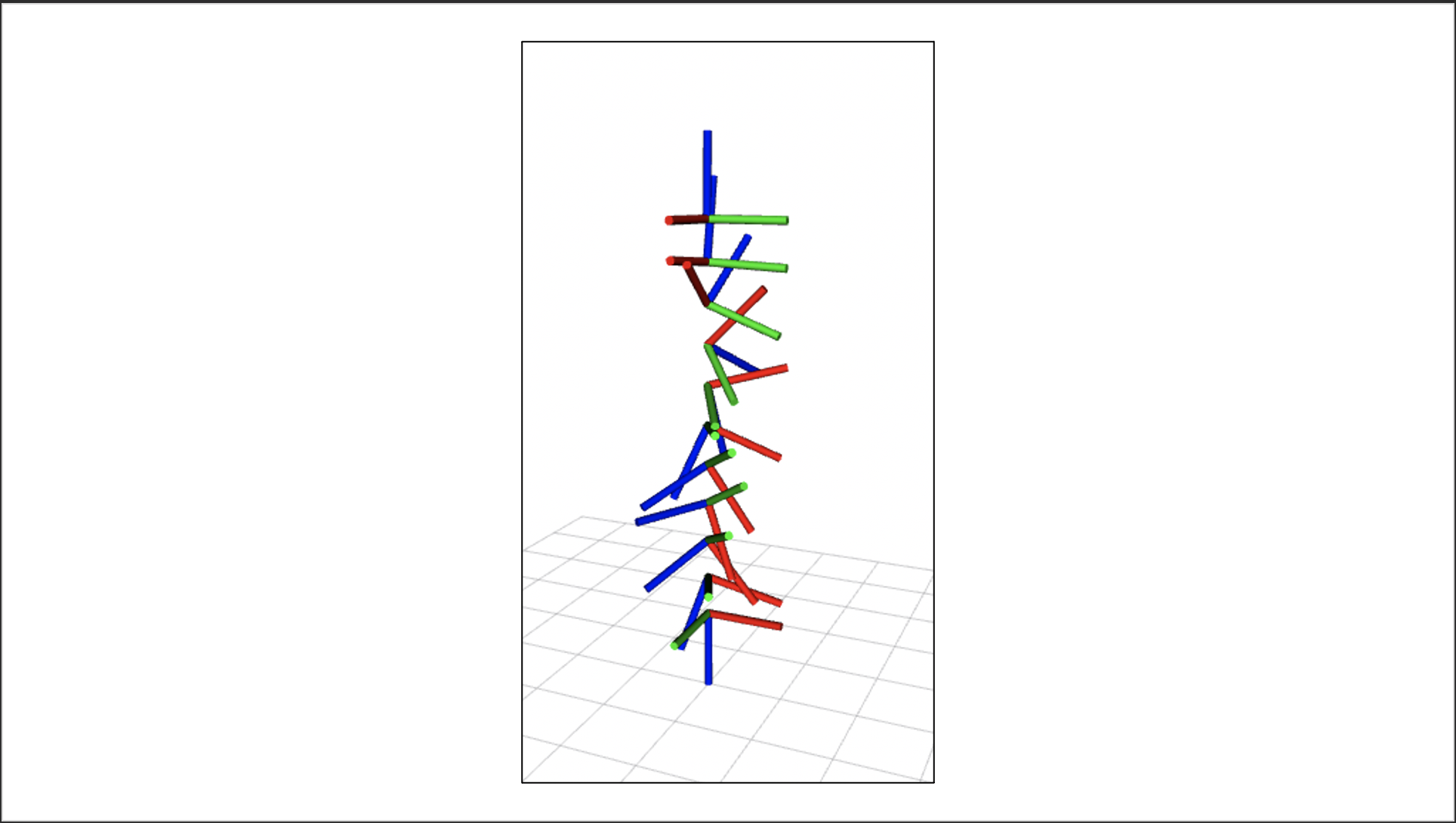}
         \caption{Method from \cite{lo1995smooth}.}
         \label{fig:lo}
     \end{subfigure}
    \hspace{0.1em}
     \begin{subfigure}{.31\columnwidth}
         \centering
         \includegraphics[trim=320 10 320 10, clip, width=1\linewidth]{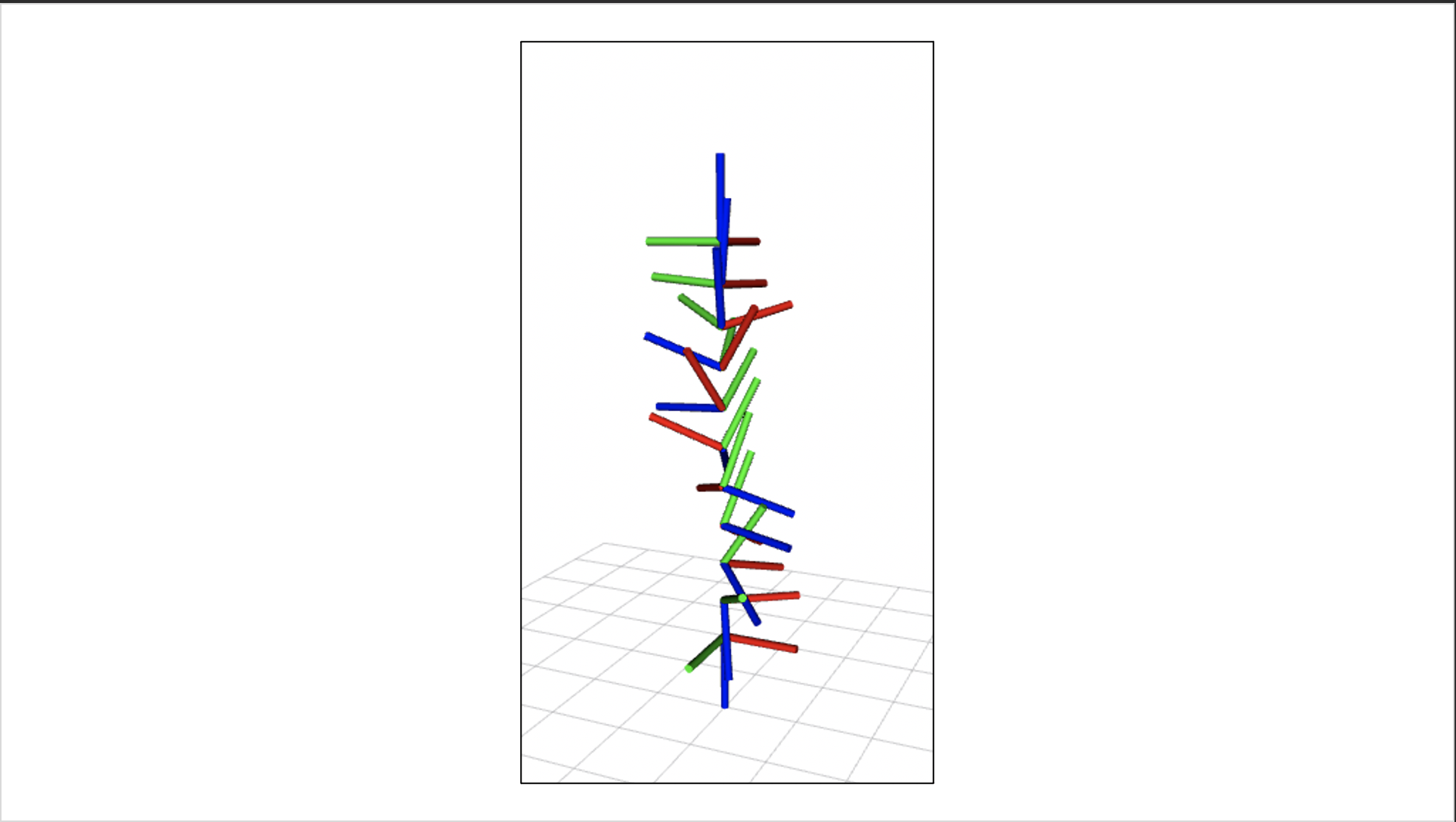}
         \caption{Method from \cite{yeh2010sliding}.}
         \label{fig:yu}
     \end{subfigure}
    \label{fig:flip}
    \caption{Visualization of a rigid body (RGB axes) executing an orientation pointing maneuver during which the orientation quaternion flips sign. Vertical height corresponds to the maneuver duration. The proposed method efficiently executes the pointing maneuver by directly operating on the non-Euclidean space formed by quaternions and explicitly taking into account the double covering property of quaternions.}
    \vskip -0.3in
\end{figure}

The main contribution of this work is a quaternion-based sliding variable that represents exponentially convergent error dynamics.
Exponential convergence is achieved despite explicitly capturing the geometry of the $\mathbb{S}^3$ manifold.
The unwinding behavior is eliminated by introducing a benign discontinuity in the sliding variable that \emph{maintains control input continuity} when used in feedback.
Additionally, the discontinuity results in \emph{global} convergence when the sliding variable is used in feedback.
When compared to other methods the proposed approach has faster convergence without unwinding and is easier to analyze. 
Moreover, the global stability results are much stronger than those obtainable in $\mathrm{SO(3)}$.
Three controllers with strong performance guarantees (nonlinear PD, robust, and adaptive sliding control) are derived using the proposed sliding variable.
The developed method stems from previous work on provably-safe UAV collision avoidance \cite{lopez2018robust} where high-performance attitude control was necessary for establishing safety guarantees despite high model uncertainty. 
This work opens avenues for research in designing trajectories that explicitly consider how a maneuver can exacerbate or reduce the effects of model uncertainty.
Simulation results confirm the predicted performance and demonstrate the benefits of the approach. 


\section{BACKGROUND \& PROBLEM FORMULATION}
\label{sec:formulation}
A unit quaternion $q$ is a four vector that lies on the three-sphere $\mathbb{S}^3$ and consists of a real and vector part, i.e., $q = (\,q^\circ,\vec{q}\,)$. 
The set of unit quaternions form a Lie group under the multiplication operator $\otimes$, where given two quaternions $p$ and $q$ the quaternion product is
\begin{equation*}
    p \otimes q = \left[\begin{array}{c}
         p^\circ q^\circ - \vec{p}^\top \vec{q} \\ p^\circ\vec{q} + q^\circ\vec{p} + p \times q
    \end{array}\right].
\end{equation*}
The inverse of quaternion $q$ is the conjugate quaternion $q^*:=(\,q^\circ,-\vec{q}\,)$ which satisfies $q \otimes q^* = q^* \otimes q = (1,0,0,0)$.
In the context of mechanics, quaternions are particularly useful for representing orientation.
A rotation about a vector $\hat{n}$ through angle $\phi$ can be compactly expressed as $q = (\mathrm{cos}(\nicefrac{\phi}{2}), \mathrm{sin} (\nicefrac{\phi}{2}) \hat{n})$.
Quaternions double cover the special orthogonal group $\mathrm{SO}(3)$ so $q$ and $-q$ represent the same orientation.
There are then two potential ``shortest paths" that connect any two quaternions (see \cref{fig:double_cover}).
The so-called unwinding phenomenon occurs when the longer path (blue curve in \cref{fig:double_cover}) is selected resulting in a longer-than-necessary maneuver.
From a practical and theoretical point of view, any quaternion-based algorithm must capture the geometric properties of the Lie group they form.
Put simply, quaternions must not be treated as vectors in $\mathbb{R}^4$.

The quaternion attitude dynamics can be expressed as
\begin{equation}
    \begin{aligned}
    \dot{q} &= \frac{1}{2}q \otimes \left[\begin{array}{c} 0 \\ \omega\end{array} \right] \\
    J \dot{\omega} &= -\omega \times J\omega  + f(q,\omega) + M_b + d,
    \end{aligned}
    \label{eq:dynamics}
\end{equation}
with $q$ and $\otimes$ are defined as before, angular velocity $\omega \in \mathbb{R}^3$, symmetric positive-definite inertia tensor $J \in \mathscr{S}^3_+$, control input $M_b \in \mathbb{R}^3$, unknown dynamics $f : \mathbb{S}^3 \times \mathbb{R}^3 \rightarrow \mathbb{R}^3$, and unknown disturbance $d\in\mathbb{R}^3$.


The goal of this work is to define a sliding variable that represents exponentially convergent error dynamics while capturing the geometry of the Lie group formed by quaternions. 
By directly considering the topology of $\mathbb{S}^3$ the proposed sliding variable should result in faster tracking error convergence than if they were treated as vectors in Euclidean space. 
Additionally, global convergence results, rather than almost global, should be achieved if the sliding variable is designed carefully.

\section{SLIDING VARIABLES ON MANIFOLDS}
\label{sec:sliding}

\subsection{Overview}
Stability analysis and tracking performance evaluation can be simplified through the use of so-called sliding variables.
If an appropriate sliding variable can be defined then all analysis can be performed on a simpler reduced-ordered system.
For instance, if a hierarchical system is formed by selecting the sliding variable $s$ to describe the desired tracking error dynamics, then the trajectory tracking problem reduces to finding a feedback policy that makes the manifold $\mathcal{S} := \{(\vec{q}_e,\omega_e): s(\vec{q}_e,\omega_e)=0\}$ invariant.
In most cases defining a sliding variable is trivial since many systems operate in $\mathbb{R}^n$.
For systems that lie on a manifold other than $\mathbb{R}^n$, such as $\mathbb{S}^3$ or $\mathrm{SO(3)}$, it can be difficult to construct a sliding variable with simple exponentially convergent  error dynamics.
This section will first present a quaternion-based sliding variable that captures the geometry of $\mathbb{S}^3$ with tracking error dynamics that are globally exponentially convergent.
The proposed quaternion-based sliding variable is then compared to a few common $\mathbb{S}^3\times \mathbb{R}^3$ and $\mathrm{SO(3)} \times \mathbb{R}^3$ alternatives.

\subsection{Sliding Variables in $\mathbb{S}^3\times \mathbb{R}^3$}
Consider the error quaternion $q_e := q_d^* \otimes q$ where $q_d^*$ is the conjugate of the desired quaternion and $q$ is the current quaternion.
The error quaternion has dynamics
\begin{equation}
    \dot{q}_e = \frac{1}{2} q_e \otimes \left[\begin{array}{c} 0 \\ \omega_e\end{array} \right] = \left[ \begin{array}{c} -\vec{q}_e^\top \omega_e \\ q_e^\circ \omega_e + \vec{q}_e \times \omega_e \end{array} \right].
    \label{eq:qe_dynamics}
\end{equation}
Let the sliding variable $s$ be defined as
\begin{equation}
    s = \omega_e + \lambda \mathrm{sgn}_+\left(q_e^\circ\right)\vec{q}_e,
    \label{eq:s}
\end{equation}
with $\omega_e:= \omega - \omega_d$, $\lambda >0$, and 
\begin{equation}
    \mathrm{sgn}_+(\cdot) := \begin{cases}
    1 & \mathrm{if}~\cdot \geq 0 \\
    -1 & \mathrm{if}~\cdot < 0.
    \end{cases}
    \label{eq:sign}
\end{equation}
Note that all operations in \cref{eq:s} are element-wise.
The function $\mathrm{sgn}_+$ in \cref{eq:sign} is different than its standard definition but is necessary for establishing global exponential convergence.
\cref{proposition:qe} shows that if the manifold $\mathcal{S}$ is made invariant via feedback then the the error quaternion $q_e$ converges to the identity quaternion exponentially.

\begin{figure}[t!]
\vskip 0.1in
         \centering
         \includegraphics[scale=0.3]{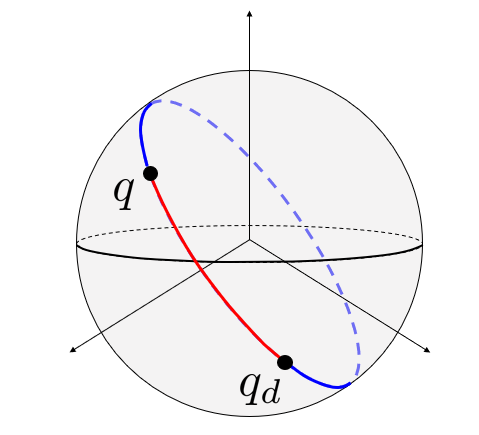}
         \caption{Illustration of the double covering property of quaternions in $\mathbb{S}^2$. Two valid paths (red and blue) connect the actual $q$ and desired quaternion $q_d$. The so-called unwinding phenomenon occurs when a longer path (blue) is taken to reach the desired quaternion $q_d$.}
         \label{fig:double_cover}
    \vskip -0.2in
\end{figure}

\begin{proposition}
    Assume that the manifold $\mathcal{S} = \{(\vec{q}_e,\omega_e): s(\vec{q}_e,\omega_e)=0\}$ is invariant. Then, for any trajectory initialized on $\mathcal{S}$, the quaternion error $q_e$ converges to the identity quaternion exponentially.
    \label{proposition:qe}
\end{proposition}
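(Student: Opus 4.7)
The plan is to substitute the sliding constraint into the error kinematics to obtain a reduced-order flow on $\mathbb{S}^3$, and then find a Lyapunov function for this flow that yields exponential decay of $1-|q_e^\circ|$, which is equivalent to convergence to the identity orientation.

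First, I would use $s=0$ to eliminate $\omega_e$: on $\mathcal{S}$ we have $\omega_e = -\lambda\,\mathrm{sgn}_+(q_e^\circ)\vec{q}_e$. Substituting this into \cref{eq:qe_dynamics} and using $\vec{q}_e\times\vec{q}_e = 0$ yields the reduced-order dynamics
\begin{equation*}
    \dot{q}_e^\circ \;=\; \lambda\,\mathrm{sgn}_+(q_e^\circ)\,\|\vec{q}_e\|^2, \qquad \dot{\vec{q}}_e \;=\; -\lambda\,\mathrm{sgn}_+(q_e^\circ)\,q_e^\circ\,\vec{q}_e.
\end{equation*}
Noting that $\mathrm{sgn}_+(q_e^\circ)\,q_e^\circ = |q_e^\circ|$, the vector part contracts toward zero, and the scalar part is driven toward $\pm 1$ according to its initial sign.

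Next, I would pick the Lyapunov candidate $V = 1 - |q_e^\circ|$, which is nonnegative on $\mathbb{S}^3$ and vanishes exactly at the two preimages of the identity orientation. Using the unit-norm constraint $|q_e^\circ|^2 + \|\vec{q}_e\|^2 = 1$, a direct computation gives
\begin{equation*}
    \dot{V} \;=\; -\mathrm{sgn}_+(q_e^\circ)\,\dot{q}_e^\circ \;=\; -\lambda\bigl(1-|q_e^\circ|^2\bigr) \;=\; -\lambda V\,(2-V).
\end{equation*}
Since $0\le V\le 1$ on $\mathbb{S}^3$, the factor $2-V \ge 1$, and therefore $\dot V \le -\lambda V$. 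By the comparison lemma, $V(t)\le V(0)e^{-\lambda t}$, i.e., $|q_e^\circ|\to 1$ exponentially at rate $\lambda$, which (together with the unit-norm constraint) forces $\|\vec{q}_e\|\to 0$ exponentially. Hence $q_e$ approaches $\pm(1,\vec 0)$, both of which represent the identity orientation.

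The subtle step that I expect to require care is the discontinuity of $\mathrm{sgn}_+$ at $q_e^\circ=0$: the reduced dynamics are only piecewise smooth, so I must rule out any chattering or ill-posed crossing across the hyperplane $q_e^\circ=0$. I would handle this by checking the sign-invariance of $q_e^\circ$ under the closed-loop flow. From $\dot q_e^\circ = \lambda\,\mathrm{sgn}_+(q_e^\circ)\,\|\vec q_e\|^2$, the scalar part is nondecreasing whenever $q_e^\circ\ge 0$ and nonincreasing whenever $q_e^\circ<0$, and at $q_e^\circ=0$ the convention $\mathrm{sgn}_+(0)=1$ gives $\dot q_e^\circ=\lambda>0$ (since $\|\vec q_e\|=1$ there), pushing the trajectory immediately into the open half-space $\{q_e^\circ>0\}$. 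Thus $\mathrm{sgn}_+(q_e^\circ)$ is constant along any forward trajectory, the reduced-order flow is well-defined, and the Lyapunov estimate above holds globally on $\mathcal S$, completing the proof.
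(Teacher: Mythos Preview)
Your argument is correct. Both you and the paper substitute the constraint $\omega_e=-\lambda\,\mathrm{sgn}_+(q_e^\circ)\vec q_e$ into \cref{eq:qe_dynamics}, pick a Lyapunov function on the reduced flow, and then dispatch the set $\{q_e^\circ=0\}$ by observing that $\dot q_e^\circ=\lambda\|\vec q_e\|^2>0$ there. The difference is in the choice of Lyapunov function: the paper uses $V=\|\vec q_e\|^2$ and obtains $\dot V=-2\lambda|q_e^\circ|\,V$, a decay rate that is \emph{state-dependent} and degenerates as $q_e^\circ\to 0$, so a separate argument is needed to rule out lingering near $q_e^\circ=0$ before ``exponential'' can be claimed. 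Your choice $V=1-|q_e^\circ|$ gives $\dot V=-\lambda V(2-V)\le-\lambda V$, yielding a \emph{uniform} rate $\lambda$ directly via the comparison lemma; the two functions are related by $\|\vec q_e\|^2=V(2-V)$, which is precisely why your inequality collapses so neatly. The price you pay is that $V$ is nonsmooth at $q_e^\circ=0$, but your sign-invariance argument (trajectories leave $\{q_e^\circ=0\}$ instantly and never return) handles this cleanly and is essentially the same observation the paper uses. Overall your route is at least as tight as the paper's and arguably gives a sharper convergence statement.
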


\begin{proof}
First assume the manifold $\mathcal{S}$ is invariant, i.e., $s=0$ for all $t\geq t_0$.
Then, from \cref{eq:s},
\begin{equation}
    \omega_e + \lambda\mathrm{sgn}_+\left(q_e^\circ\right)\vec{q}_e = 0.
    \label{eq:s0}
\end{equation}
Now consider the candidate Lyapunov function $V = \|\vec{q}_e\|^2$.
Differentiating along the quaternion error dynamics \cref{eq:qe_dynamics},
\begin{equation}
    \dot{V} = 2 \vec{q}_e^\top\dot{\vec{q}}_e = 2 \vec{q}_e^\top \left[ -q_e^\circ \omega_e + \vec{q}_e \times \omega_e \right] = -2 q_e^\circ \vec{q}_e^\top \omega_e,
    \label{eq:v_dot_qe_we}
\end{equation}
where the third equality is obtained by noting the cross product between $\vec{q}_e$ and $\omega_e$ produces a vector perpendicular to $\vec{q}_e$.
Since $s=0$ by assumption, \cref{eq:s0} can be substituted into $\cref{eq:v_dot_qe_we}$ resulting in
\begin{equation}
    \label{eq:v_dot_qe}
    \dot{V} = \frac{d}{dt} \|\vec{q}_e\|^2 = - 2 \lambda |q_e^\circ| \|\vec{q}_e\|^2,
\end{equation}
which shows $\vec{q}_e$ converges exponentially to zero with rate $\lambda |q_e^\circ|$.
When $q_e^\circ=0$, i.e., if $\|\vec{q}_e\|^2 = 1$, from \cref{eq:qe_dynamics} $\dot{q}_e^\circ = - \vec{q}_e^\top \omega_e = \lambda \mathrm{sgn}_+(q_e^\circ) \| \vec{q}_e \|^2$ where \cref{eq:s0} is again utilized.
With the definition of $\mathrm{sgn}_+$ in \cref{eq:sign}, then $\dot{q}_e^\circ > 0$ when $q_e^\circ = 0$ which guarantees that $\frac{d}{dt}\|\vec{q}_e\|^2 = 0$ for all $t\geq t_0$ only when $\|\vec{q}_e\| = 0$.
Hence, $\vec{q}_e$ converges to zero exponentially. 
Furthermore, the error quaternion $q_e$ globally converges to the identity quaternion exponentially.
\end{proof}

\cref{proposition:qe} shows that \emph{global} exponential convergence to the identity quaternion occurs with \cref{eq:s} when $s=0$ indefinitely.
This is a much stronger result than what is obtainable in $\mathrm{SO(3)}$.
Additionally, since the convergence is exponential, if $s\neq0 $ but tends to zero then the error quaternion will tend to the identity quaternion.
This property is key to show convergence with adaptive controllers that utilize sliding variables, and will be discussed further in \cref{sec:adaptive}.

The importance of capturing the topology of $\mathbb{S}^3$ in \cref{eq:s} can be understood by analyzing the convergence of the sliding variable proposed in \cite{lo1995smooth}; the key difference being how the quaternion error is defined.
First let $[\cdot]_\times$ be the operator that maps a vector to a skew symmetric matrix, i.e., $[\cdot]_\times: \mathbb{R}^3 \rightarrow \mathfrak{so}(3)$ where $\mathfrak{so}(3)$ is the Lie Algebra of $\mathrm{SO(3)}$.
Now let $T(q):= q^\circ I + [\,\vec{q}\,]_\times$ where $T(q)$ is invertible except when $q^\circ=0$. 
Define a different sliding variable as
\begin{equation}
\label{eq:s_prime}
    s' = \tilde{\omega} + \lambda \tilde{q},
\end{equation}
where $\tilde{\omega} := \omega - 2T(q)^{-1}\dot{\vec{q}}_d$ and $\tilde{q} := \vec{q} - \vec{q}_d$.
It can be shown that when $s'=0$ then
\begin{equation}
\label{eq:qe_legacy}
    \dot{\tilde{q}} = - \frac{1}{2} \lambda T(q) \tilde{q}.
\end{equation}
For the Lyapunov-like function $V = \|\tilde{q}\|^2$, using \cref{eq:qe_legacy} leads to
\begin{equation}
\label{eq:v_dot_qe_legacy}
    \dot{V} = \frac{d}{dt} \|\tilde{q}\|^2 \leq  - \lambda q^\circ \|\tilde{q}\|^2,
\end{equation}
which shows $\tilde{q} \rightarrow 0$ exponentially for $q^\circ > 0$.
While \cref{eq:v_dot_qe_legacy} is similar to \cref{eq:v_dot_qe}, the convergence behavior is quite different: it takes longer to reach the desired quaternion $q_d$ using the sliding variable \cref{eq:s_prime} than with \cref{eq:s}.
From a differential geometry perspective, this is not surprising since the shortest path between two points in $\mathbb{S}^3$ is known to be a great arc on $\mathbb{S}^3$.
By treating quaternions as vectors in $\mathbb{R}^4$, the vector difference in \cref{eq:s_prime} inherently leads to slower convergence since $\tilde{q} \not\in \mathbb{S}^3$.
By properly defining the quaternion error $q_e$ in \cref{eq:s}, the sliding variable $s$ is guaranteed to remain on the manifold $\mathbb{S}^3 \times \mathbb{R}^3$ resulting in faster convergence.
Hence, the geometric properties of the state space manifold \emph{must be captured} when designing sliding variables if the desired performance is to be achieved. 

Another noteworthy comment on \cref{eq:s} is the definition of $\mathrm{sgn}_+$ in \cref{eq:sign}.
If the standard function $\mathrm{sgn}$ were to be used in \cref{eq:s} in place of \cref{eq:sign} then the sliding surface $\mathcal{S}$ would have two stable equilibria\footnote{As noted previously, quaternions double cover $\mathrm{SO(3)}$ so $q_e$ and $-q_e$ represent the same orientation error.}, mainly $\left(q_e^\circ, \|\vec{q}_e\|\right) = (\pm 1,0)$ and $\left(q_e^\circ, \|\vec{q}_e\|\right) = ( 0,\pm 1)$.
With the choice of \cref{eq:sign}, the second equilibrium is made unstable and $q_e$ exponential convergences to $(\pm 1,0)$. 
Moreover, since $q$ and $-q$ represent the same orientation, exponential convergence is indeed global.
While the discontinuity in \cref{eq:s} may seem disconcerting, the $\mathrm{sgn}_+$ function actually \emph{maintains} continuity of a control input when $s$ is used in feedback.
Furthermore, it ensures the controller selects the \emph{shortest} path to the desired orientation $q_d$ subsequently avoiding the unwinding  phenomenon that can be observed with other quaternion-based controllers (such as \cite{yeh2010sliding, cortes2019sliding}).
This behavior can be deduced by noting if $q_e^\circ < 0$ then the rotation is more the $\pi$ radians.
Taking the conjugate of $q_e$, i.e., negating $\vec{q}_e$, results in a rotation less than $\pi$ radians as desired.
If, for instance, $\mathrm{sgn}_+$ is omitted from \cref{eq:s} then \cref{eq:v_dot_qe} becomes $\dot{V} = -2 \lambda q_e^\circ \|\vec{q}_e\|^2$ which is unstable for $q_e^\circ < 0$ and unwinding occurs.
The above points will be further confirmed with the results presented in \cref{sec:results}.

\subsection{Comparison to Sliding Variables in $\mathrm{SO(3)}\times \mathbb{R}^3$}
The two stated advantages of using the quaternion-based sliding variable \cref{eq:s} are 1) exponential convergence of the error quaternion and 2) the convergence result is global; both of which are straightforward to show.
If a sliding variable is instead defined in $\mathrm{SO(3)}$, then neither of the previous statements are necessarily true let alone easy to prove.
For one, stability analysis with rotation matrices can become fairly difficult.
For another, at best only almost global convergence results are possible.
This section will briefly discuss a recently proposed $\mathrm{SO(3)}$ sliding variable \cite{cortes2019sliding}, which, despite its simple structure, is quite difficult to analyze until it is represented with quaternions.

First let $R \in \mathrm{SO(3)}$ which has kinematics $\dot{R} = R [\omega]_\times$.
If $R_d$ is the desired rotation matrix then the error rotation matrix is $R_e = R_d^\top R$ with kinematics
\begin{equation}
        \dot{R}_e = \left(R_d [\omega_d]_\times \right)^\top R + R_d^\top R [\omega]_\times = R_e [\breve{\omega}_e]_\times,
\end{equation}
where $\breve{\omega}_e := \omega - R_e^\top \omega_d$.
Let $\mathcal{P}(A):=\frac{1}{2}\left(A-A^\top\right)$ be the skew symmetric part of matrix $A$. 
Additionally, let $(\cdot)^\vee$ denote the inverse map of $[\cdot]_\times$.
A candidate $\mathrm{SO(3)}$ sliding variable is then
\begin{equation}
\label{eq:s_R}
    s_R = \breve{\omega}_e + \lambda \left( \mathcal{P}(R_e) \right)^\vee.
\end{equation}
For the candidate Lyapunov function $V_R = \mathrm{trace}\left(3I-R_e\right)$ and under the condition $s_R = 0$ indefinitely, one can show
\begin{equation}
\label{eq:v_R_dot}
    \dot{V}_R = -\frac{\lambda}{2} \|\left(\mathcal{P}(R_e)\right)^{\vee}\|^2 \leq 0,
\end{equation}
which indicates almost global \emph{asymptotic} convergence of $R_e$ to $I$.
This is a much weaker result than \cref{proposition:qe} and would preclude \cref{eq:s_R} from being used for feedback in adaptive control.
However, if one were to convert \cref{eq:v_R_dot} to a quaternion representation, then almost global \emph{exponential} stability can be established in a straightforward manner \cite{culbertson2020decentralized}.
Of course other $\mathrm{SO(3)}$ sliding variables can be defined, e.g., \cite{wang2019geometric}, but are not as simple in structure to \cref{eq:s_R} and are just as complicated -- if not more so -- to analyze.
This not only demonstrates the complexity of defining sliding variables in $\mathrm{SO(3)}$ but also how quaternions can simplify analysis and lead to stronger results.

\section{CONTROL WITH $\mathbb{S}^3\times\mathbb{R}^3$ SLIDING VARIABLES}
\label{sec:control}
\subsection{Overview}
This section shows how the quaternion-based sliding variable presented in \cref{sec:sliding} can be used by several feedback controllers, including robust and adaptive controllers.
The following assumptions are made regarding system \cref{eq:dynamics}.

\begin{assumption}
\label{assumption:d}
The disturbance $d$ belongs to a known closed convex set, i.e., $d \in \mathcal{D} := \{ d: |d| \leq D \}$.
\end{assumption}
\begin{remark}
The operation $|\cdot|$ is element-wise so $D \in \mathbb{R}^3_{\geq0}$.
\end{remark}

\begin{assumption}
\label{assumption:f}
The uncertainty in dynamics $f$ is additive and upper bounded, i.e., $f(q,\omega) = \hat{f}(q,\omega) + \tilde{f}(q,\omega)$ where $\hat{f}$ is the nominal dynamics and $|\tilde{f}(q,\omega)| \leq \mathcal{F}(q,\omega)$ is the uncertain dynamics.
\end{assumption}
\begin{remark}
The operation $|\cdot|$ is element-wise so $\mathcal{F}: \mathbb{S}^3 \times \mathbb{R}^3 \rightarrow \mathbb{R}^3_{\geq0}$. 
\end{remark}

\begin{assumption}
\label{assumption:J}
The uncertainty in inertia tensor is additive and bounded, i.e., $J = \hat{J} + \tilde{J}$ where $\hat{J}$ is the nominal inertia tensor and $|\tilde{J}_{i,j}| \leq \mathcal{J}_{i,j}$ is the uncertain part.
\end{assumption}

\subsection{Nonlinear PD Sliding Control with Feedforward}
\label{sec:nonlinear_pd}
Before considering the more general case of uncertain dynamics, it is instructive to analyze the scenario where only an unknown but bounded disturbance acts on \cref{eq:dynamics}.
\cref{sec:adaptive} will address the scenario where the model is only partially known.
\cref{theorem:pd} shows that with \cref{eq:s} and under Assumption~\ref{assumption:d} a nonlinear PD controller with feedfoward results in global exponential closed-loop stability for forward complete desired trajectories.
\begin{theorem}
\label{theorem:pd}
Consider system \cref{eq:dynamics} with known dynamics $f$ and inertia tensor $J$.
For a bounded disturbance $d$, the tracking error convergences exponentially to a region near the identity quaternion with the control law
\begin{equation}
\label{eq:pd}
    M_b = J \dot{\omega}_d + \omega \times J \omega  + f(q,\omega) - \lambda J \mathrm{sgn}_+(q_e^\circ) \dot{\vec{q}}_e - K s
\end{equation}
where $s$ is the sliding variable \cref{eq:s}, $K = [k_1,k_2,k_3]^\top$, and $Ks$ is a vector with elements $k_i s_i$ for $i=1,2,3$.
\end{theorem}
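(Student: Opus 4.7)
The plan is the textbook sliding-control derivation adapted to the quaternion setting of \cref{eq:s}. First I would compute $\dot{s}$ by differentiating \cref{eq:s} and multiplying by $J$. Off the measure-zero set $\{q_e^\circ = 0\}$ the function $\mathrm{sgn}_+(q_e^\circ)$ is locally constant, so $J\dot{s} = J\dot{\omega} - J\dot{\omega}_d + \lambda J\,\mathrm{sgn}_+(q_e^\circ)\dot{\vec{q}}_e$. Substituting the rigid-body dynamics \cref{eq:dynamics} for $J\dot\omega$ gives
\begin{equation*}
    J\dot{s} = -\omega\times J\omega + f(q,\omega) + M_b + d - J\dot\omega_d + \lambda J\,\mathrm{sgn}_+(q_e^\circ)\dot{\vec{q}}_e.
\end{equation*}
Plugging in the control law \cref{eq:pd} causes every term except $-Ks + d$ to cancel exactly, so the closed-loop $s$-dynamics collapse to $J\dot{s} = -Ks + d$.

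Next I would take the quadratic Lyapunov function $V_s = \tfrac{1}{2}s^\top J s$, which is positive definite since $J\in\mathscr{S}^3_+$. Differentiating and using the closed-loop equation gives $\dot V_s = -s^\top K s + s^\top d$. Bounding the cross term by Cauchy--Schwarz and \cref{assumption:d}, and writing $k_{\min} = \min_i k_i$ and $\|D\|$ for a suitable norm of the disturbance bound, yields $\dot V_s \le -k_{\min}\|s\|^2 + \|s\|\,\|D\|$. A standard completion-of-squares argument then produces $\dot V_s \le -\alpha V_s + \beta$ for positive constants $\alpha,\beta$ depending on $k_{\min}$, $\|D\|$, and the extreme eigenvalues of $J$, from which the comparison lemma delivers exponential convergence of $s$ to a residual ball whose radius is proportional to $\|D\|/k_{\min}$.

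Finally I would invoke \cref{proposition:qe} to translate ultimate boundedness of $s$ into ultimate boundedness of the quaternion tracking error. The proposition establishes that $s=0$ forces exponential decay of $\|\vec q_e\|$; since the sliding dynamics can be viewed as a perturbation $\omega_e = -\lambda\,\mathrm{sgn}_+(q_e^\circ)\vec q_e + s$, the small residual $s$ enters as a vanishing input to the exponentially stable $\vec q_e$-subsystem, so $\vec q_e$ converges exponentially to a neighborhood of zero (equivalently, $q_e$ to a neighborhood of the identity quaternion).

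The main obstacle I anticipate is handling the discontinuity of $\mathrm{sgn}_+$ in the derivative computation: one must argue that the jump at $q_e^\circ=0$ is benign because, as noted after \cref{eq:sign}, the proof of \cref{proposition:qe} shows $\dot q_e^\circ > 0$ at the switching surface, so trajectories cross it transversally in the correct direction and the Lyapunov estimate extends across the switch (formally via a Filippov/Carathéodory argument, or simply by noting that $V_s$ is absolutely continuous along solutions). Beyond that, the argument is a direct cancellation followed by a quadratic Lyapunov bound, so I expect no further conceptual difficulties.
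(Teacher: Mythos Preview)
Your proposal is correct and follows essentially the same route as the paper: a quadratic Lyapunov function $V \propto s^\top J s$, cancellation via \cref{eq:pd} to obtain the perturbed linear $s$-dynamics, a bound yielding exponential convergence of $s$ to a residual set, and then the hierarchy with \cref{proposition:qe} to conclude for $q_e$. The only cosmetic difference is that the paper bounds each component $s_i$ separately using $k_i$ and $D_i$ rather than passing through $k_{\min}$ and a norm of $D$; your additional remarks on the $\mathrm{sgn}_+$ discontinuity are in fact more careful than what the paper writes in this proof.
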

\begin{proof}
Consider the candidate Lyapunov function $V = s^\top J s$ where $s = \omega_e + \lambda \mathrm{sgn}_+(q_e^\circ) \vec{q}_e$.
Differentiating along the $s$ dynamics,
\begin{align*}
    \dot{V} =&~ 2s^\top J \dot{s} = 2 s^\top \left[J\dot{\omega} - J\dot{\omega}_d + \lambda J \mathrm{sgn}_+(q_e^\circ) \dot{\vec{q}}_e \right] \\
    =&~ 2 s^\top \left[ -\omega \times J \omega + f(q,\omega) + M_b + d \right. \\
    &\left. - J\dot{\omega}_d + \lambda J \mathrm{sgn}_+(q_e^\circ) \dot{\vec{q}}_e \right]
\end{align*}
Using \cref{eq:pd}, 
\begin{equation*}
    \dot{V} = -2 \sum_{i=1}^3 \left( k_i s_i^2  - s_i d_i \right)
\end{equation*}
Since $J$ is symmetric and positive-definite matrix then $\ubar{\alpha}I \leq J \leq \bar{\alpha} I$ for $\ubar{\alpha},~\bar{\alpha} > 0$.
By Assumption~\ref{assumption:d}, each $s_i$ satisfies
\begin{equation*}
    |s_i(t)| \leq \frac{1}{\sqrt{\ubar{\alpha}}} |s_i(0)| e^{-k_i t} + \frac{D_i}{k_i \sqrt{\ubar{\alpha}}} \left(1-e^{-k_i t}\right),
\end{equation*}
which shows $|s_i(t)| \leq \nicefrac{D_i}{k_i \sqrt{\ubar{\alpha}}}$ as $t\rightarrow\infty$ exponentially.
Therefore, since $s$ and the error dynamics form a hierarchy, the error dynamics converge exponentially to region near the identity quaternion.  
\end{proof}

\begin{remark}
\label{remark:continuous}
The $\mathrm{sgn}_+$ term ensures $M_b$ \emph{remains continuous} if the error quaternion $q_e$ were to slip sign during a maneuver.
Smooth control is therefore guaranteed \emph{because} the sliding variable contains a discontinuity.
\end{remark}

\cref{theorem:pd} showed that a nonlinear PD controller with feedforward and the sliding surface defined in \cref{eq:s} results in global exponential closed-loop stability.
The term ``nonlinear PD" is appropriate due to the presence of $\dot{\vec{q}}_e$ in \cref{eq:pd}, which, from \cref{eq:qe_dynamics}, is $\dot{\vec{q}}_e = q_e^\circ \omega_e + \vec{q}_e\times \omega_e$ and arises from the geometry of $\mathbb{S}^3$.
For comparison, a commonly used quaternion-based PD controller without feedforward originally proposed in \cite{wie1985quaternion}, later applied to aerial robots in \cite{fresk2013full}, and used in \cite{cutler2012design} is of the form
\begin{equation*}
    M_b^{pd} = -\mathrm{sgn}(q_e^\circ) K_p \vec{q}_e - K_d \omega,
\end{equation*}
where $K_p,K_d > 0$ and all products are element-wise.
It is straightforward to show global asymptotic closed-loop stability via the LaSalle invariance principle for any $q_d$ and $\omega_d = 0$.
However, little can be said about the performance of the controller for arbitrary $\omega_d$.
In applications where $\omega_d$ is not trivially zero, such as time-varying attitude tracking for spacecraft pointing or even UAV obstacle avoidance, a controller like the one presented in \cref{theorem:pd} is required for better closed-loop performance.

\subsection{Robust Sliding Control}
\label{sec:robust}
The controller presented in \cref{sec:nonlinear_pd} works well when only an unknown but bounded disturbance is present.
As discussed previously, the main advantage of using sliding variables is that several robust and adaptive controllers can be immediately employed.
One such controller is the well-known sliding mode controller which is able to achieve zero tracking error exponentially through high-frequency control switching.
This so-called chattering behavior is synonymous with sliding mode control and makes it unusable for many applications.
This section will instead present a continuous version known as the boundary layer sliding controller.
\cref{theorem:robust} shows how the sliding variable defined in \cref{eq:s} in conjunction with a robust controller gain leads to global exponential closed-loop stability despite model uncertainty.
\begin{theorem}
\label{theorem:robust}
For system \cref{eq:dynamics} with unknown disturbances, dynamics, and inertia tensor that satisfy Assumptions~\ref{assumption:d}-\ref{assumption:J}, the sliding variable $s$ from \cref{eq:s} converges in finite time to the region $|s| \leq \Phi$ with the control law
\begin{equation}
\label{eq:robust_controller}
    M_b = \hat{J} \dot{\omega}_d + \omega \times \hat{J} \omega  + \hat{f}(q,\omega) - \lambda \hat{J} \mathrm{sgn}_+(q_e^\circ) \dot{\vec{q}}_e - K \mathrm{sat}\left(\nicefrac{s}{\Phi}\right),
\end{equation}
where $\mathrm{sat}$ is the saturation function, $\Phi \in \mathbb{R}^3_{>0}$ is the boundary layer thickness, and $K\mathrm{sat}\left(\nicefrac{s}{\Phi}\right)$ is a vector of elements $k_i \mathrm{sat}\left(\nicefrac{s_i}{\Phi_i}\right)$.
The robust gains $k_i$ satisfy
\begin{equation}
\label{eq:K_robust}
    \begin{aligned}
        k_i \geq &~ \left| \omega \times \tilde{J} \omega \right|_i + \left(\mathcal{J} \left| \dot{\omega}_{d} + \lambda \mathrm{sgn}\left(q_e^\circ\right)\dot{\vec{q}}_e\right|\right)_i \\
        & + \mathcal{F}_i + D_i + \eta_i ,
    \end{aligned}
\end{equation}
where $\eta_i > 0$ and $(\cdot)_i$ denotes the $i^{th}$ element of vector $(\cdot)$.
Furthermore, the error quaternion converges exponentially to a region near the identity quaternion.
\end{theorem}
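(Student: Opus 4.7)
The plan is to mirror the Lyapunov argument used in Theorem~\ref{theorem:pd}, but now carry along the model-mismatch terms generated by replacing $J,f$ with $\hat J,\hat f$ in the control law, and show that outside the boundary layer the robust gain \cref{eq:K_robust} dominates every one of those mismatch terms element by element. Specifically, I would take $V = s^\top J s$ and differentiate along the $s$-dynamics. Substituting $M_b$ from \cref{eq:robust_controller} into $J\dot\omega = -\omega\times J\omega + f + M_b + d$ and grouping $J = \hat J + \tilde J$, $f = \hat f + \tilde f$, the nominal feedforward and cancellation terms collapse exactly as in the proof of Theorem~\ref{theorem:pd}, and what remains is
\begin{equation*}
    J\dot s = -K\,\mathrm{sat}(s/\Phi) + \Delta(q,\omega,t) + d,
\end{equation*}
where $\Delta = -\omega\times\tilde J\omega - \tilde J\bigl(\dot\omega_d + \lambda\,\mathrm{sgn}_+(q_e^\circ)\dot{\vec q}_e\bigr) + \tilde f$ is precisely the lumped uncertainty that Assumptions~\ref{assumption:d}--\ref{assumption:J} let us bound element-wise.

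The central computation is then to show $\tfrac{1}{2}\tfrac{d}{dt}(s_i^2)\le -\eta_i |s_i|$ whenever $|s_i|\ge \Phi_i$. Outside the boundary layer the $i$-th component of $K\,\mathrm{sat}(s/\Phi)$ is $k_i\,\mathrm{sgn}(s_i)$, so after multiplying by $s_i$ we get a $-k_i|s_i|$ term, while the uncertainty contributes at most $\bigl(|\omega\times\tilde J\omega|_i + (\mathcal J|\dot\omega_d+\lambda\,\mathrm{sgn}(q_e^\circ)\dot{\vec q}_e|)_i + \mathcal F_i + D_i\bigr)|s_i|$. The gain condition \cref{eq:K_robust} is exactly what makes the net coefficient of $|s_i|$ equal to $-\eta_i$, yielding the sliding condition $\tfrac{d}{dt}|s_i| \le -\eta_i$. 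Integrating gives finite-time convergence of each $s_i$ into $[-\Phi_i,\Phi_i]$ with reaching time at most $|s_i(0)|/\eta_i$, which is the first claim.

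For the closing exponential claim about the quaternion error, I would reuse the Lyapunov function $V_q = \|\vec q_e\|^2$ from Proposition~\ref{proposition:qe}, but now treat $s$ as a bounded perturbation rather than zero. From \cref{eq:s}, $\omega_e = s - \lambda\,\mathrm{sgn}_+(q_e^\circ)\vec q_e$, so substituting into \cref{eq:v_dot_qe_we} yields $\dot V_q = -2\lambda|q_e^\circ|\,\|\vec q_e\|^2 - 2 q_e^\circ \vec q_e^\top s$. After the finite reaching time, $|s_i|\le \Phi_i$, so the cross term is bounded by $2\|\vec q_e\|\|\Phi\|$ and a standard input-to-state-style argument gives exponential convergence of $\vec q_e$ to a ball whose radius scales with $\|\Phi\|/\lambda$. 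I would also reinvoke the boundary-case argument from Proposition~\ref{proposition:qe} (that $\dot q_e^\circ>0$ at $q_e^\circ=0$) to rule out the antipodal equilibrium, preserving the global nature of the result.

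The main obstacle I anticipate is purely bookkeeping: because Assumptions~\ref{assumption:d}--\ref{assumption:J} are stated element-wise with the $|\cdot|$ notation, and because $J$ in $V = s^\top J s$ couples the components of $s$, care is needed to decouple the components. I would handle this by working with the scalar Lyapunov functions $V_i = s_i^2$ component-by-component rather than $s^\top J s$, which avoids the off-diagonal coupling from $\tilde J$ and matches the element-wise gain condition \cref{eq:K_robust} exactly. The rest is essentially the classical boundary-layer sliding-mode argument transcribed to the quaternion setting, with the $\mathrm{sgn}_+$ term inherited from the sliding variable introducing no new difficulty because it only ever appears multiplied by a quantity whose magnitude is bounded by $\mathcal J$.
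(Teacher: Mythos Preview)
Your overall strategy---compute $J\dot s$, isolate the lumped uncertainty $\Delta$, and show the robust gain \cref{eq:K_robust} dominates it element by element---is exactly the paper's, and your identification of $\Delta$ is correct. The one place where your plan goes wrong is the final ``bookkeeping'' paragraph. You diagnose the obstacle as $J$ coupling the components of $s$ in $V=s^\top J s$, and propose to fix it by dropping $J$ and using $V_i=s_i^2$ instead. Both the diagnosis and the cure are backwards.

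With $V=s^\top J s$ there is \emph{no} coupling problem: $\dot V = 2s^\top J\dot s = 2s^\top(J\dot s)$, and $J\dot s$ is precisely the right-hand side $-K\,\mathrm{sat}(s/\Phi)+\Delta+d$, which is already bounded componentwise. The inner product $2s^\top(\cdot)$ then splits as $2\sum_i s_i(\cdot)_i$ and the element-wise gain condition applies directly. By contrast, if you switch to $V_i=s_i^2$ you must compute $\dot s_i = [J^{-1}(J\dot s)]_i$, and $J^{-1}$ \emph{introduces} the very cross-coupling you are trying to avoid; your claimed step ``multiplying by $s_i$ we get a $-k_i|s_i|$ term'' is only valid if $J$ is diagonal.

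The actual issue that needs care is not coupling but the mixed regime where some $|s_i|\le\Phi_i$ and others are outside, so that $\mathrm{sat}$ is neither globally $\mathrm{sgn}$ nor globally linear. The paper handles this cleanly by keeping $J$ in the Lyapunov function but replacing $s$ with the shifted variable $s_\Delta := s-\Phi\,\mathrm{sat}(s/\Phi)$, taking $V=s_\Delta^\top J s_\Delta$. Outside the layer $\dot s_\Delta=\dot s$ and $\mathrm{sat}(s/\Phi)=\mathrm{sgn}(s_\Delta)$, so the computation you wrote goes through verbatim and yields $\dot V\le -2\sum_i\eta_i|s_\Delta|_i$; inside the layer $s_\Delta=0$ and there is nothing to prove. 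Your closing ISS-style argument for $\vec q_e$ is fine and is what the paper summarizes as the ``hierarchy'' sentence.
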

\begin{proof}
Let $s_\Delta:= s - \Phi~ \mathrm{sat}\left(\nicefrac{s}{\Phi}\right)$ and consider the candidate Lyapunov function $V = s_\Delta^\top J s_\Delta$ where $V=0$ when $|s| = \Phi$ and $V > 0$ when $|s|\geq \Phi$.
Differentiating along the $s_\Delta$ dynamics (which are identical to the $s$ dynamics in this case),
\begin{align*}
    \dot{V} =&~ 2s_\Delta^\top J \dot{s}_\Delta = 2 s_\Delta^\top \left[J\dot{\omega} - J\dot{\omega}_d + \lambda J \mathrm{sgn}_+(q_e^\circ) \dot{\vec{q}}_e \right] \\
    =&~ 2 s_\Delta^\top \Big [ -\omega \times J \omega + f(q,\omega) + M_b + d \\
    & - J\dot{\omega}_d + \lambda J \mathrm{sgn}_+(q_e^\circ) \dot{\vec{q}}_e \Big ].
\end{align*}
Using the control law \cref{eq:robust_controller} and noting $\mathrm{sat}\left(\nicefrac{s}{\Phi}\right) = \mathrm{sgn}(s_\Delta)$,
\begin{equation}
\label{eq:v_dot_robust}
\begin{aligned}
    \dot{V} = &~ 2 s_\Delta^\top \Big [ - \omega \times \tilde{J} \omega + \tilde{f} (q,\omega) + d + \tilde{J} \omega_d   \\ 
    &  +\lambda \tilde{J} \mathrm{sgn}_+\left(q_e^\circ\right) \dot{\vec{q}}_e - K \mathrm{sgn}\left(s_\Delta\right)\Big] \\
    = & -2 \sum_{i=1}^{3} k_i|s_\Delta|_i   - 2 s_\Delta^\top \Big [ - \omega \times \tilde{J} \omega + \tilde{f} (q,\omega) \\
    & + d + \tilde{J} \omega_d + \lambda \tilde{J} \mathrm{sgn}_+\left(q_e^\circ\right) \dot{\vec{q}}_e \Big] \\ 
    \leq & - 2 \sum_{i=1}^3 |s_\Delta |_i \Big [ k_i + |\omega \times \tilde{J} \omega|_i \\
    & + \left(\mathcal{J}\left| \dot{\omega}_{d} + \lambda \mathrm{sgn}\left(q_e^\circ\right)\dot{\vec{q}}_e\right|\right)_i  + \mathcal{F}_i + D_i \Big ]\\
    \leq & - 2 \sum_{i=1}^3 \eta_i |s_\Delta|_i,
     \end{aligned}
\end{equation}
where the inequality is obtained by using the robust gain $K$ from \cref{eq:K_robust}.
\cref{eq:v_dot_robust} shows that $s_\Delta$ converges in finite time to zero and, since $s_\Delta = s - \Phi~ \mathrm{sat}\left(\nicefrac{s}{\Phi}\right)$, $s$ converges in finite time to the region $|s| \leq \Phi$.
Therefore, since $s$ and the error quaternion form a hierarchy, the error quaternion globally converges exponentially to a region, defined by the boundary layer thickness $\Phi$, near the identity quaternion.
\end{proof}

\begin{remark}
\cref{remark:continuous} also holds for controller \cref{eq:robust_controller}.
\end{remark}

\begin{remark}
The boundary layer in \cref{theorem:robust} can be made time-varying to capture the effects of state-dependent uncertainty on the tracking error \cite{slotine1991applied}.
A time varying-boundary layer is extremely useful in the context of motion planning and predictive control since they essentially represent a \emph{dynamic robust control invariant tube} which can be used to robustly tighten safety constraints (see, e.g., \cite{lopez2019dynamic,lopez2019adaptive}).
\end{remark}

\cref{theorem:robust} showed that the error quaternion globally converges exponentially to a region near the identity quaternion despite the presence of model error and external disturbances.
This is made possible by making the manifold $\mathcal{S}_{\Phi} = \{ (\vec{q}_e, \omega_e) : |s(\vec{q}_e, \omega_e)| \leq \Phi \}$ invariant with the robust control gain defined in \cref{eq:K_robust}. 
However, the robust gain $K$ can become large for systems with high model uncertainty as it must handle all possible model and disturbance realizations.
Adaptive control is one such method to reduce the robust gain $K$ and lower the control effort from feedback.

\subsection{Adaptive Sliding Control}
\label{sec:adaptive}
The performance of the nonlinear PD controller and the robust controller derived in \cref{sec:nonlinear_pd,sec:robust} can be improved by learning the unknown model parameters.
The nonlinear PD controller can be extended to systems with uncertain dynamics (not handled in \cref{theorem:pd}), while the robust gain $K$ in \cref{theorem:robust} can be reduced. 
To proceed the following standard assumption is made about \cref{eq:dynamics}.
\begin{assumption}
\label{assumption:linear_params}
The dynamics in \cref{eq:dynamics} can be expressed as a linear combination of unknown parameters $a\in\mathbb{R}^p$, i.e., $ J \dot{\omega} = J \dot{\omega}_r + Y(q,\omega,\dot{\omega}_r)^\top a + M_b$ where $Y : \mathbb{S}^3 \times \mathbb{R}^3 \times \mathbb{R}^3 \rightarrow \mathbb{R}^{p \times 3}$ and $\omega_r:=\omega_d - \lambda J {\vec{q}}_e$.
\end{assumption}

Adaptive control for systems like \cref{eq:dynamics} have been extensively studied and a number of approaches exists.
However, only recently have methods been developed that impose physical consistency on parameters, e.g., the inertia tensor being positive-definite, during adaptation \cite{wensing2017linear,lee2018natural}.
This is possible by employing the Bregman divergence operator
\begin{equation}
\label{eq:bregman}
    \mathrm{d}_\psi ( y ~\|~ x) = \psi(y) - \psi(x) - (y-x)^\top \nabla \psi(x),
\end{equation}
where $\psi$ is a strictly convex, continuously differentiable function on a closed convex set. 
The time derivative of the Bregman divergence is
\begin{equation}
\label{eq:bregman_derivative}
    \dot{\mathrm{d}}_\psi (y~\|~x) = (x-y)^\top \nabla^2 \psi(x) \dot{x},
\end{equation}
which will be useful in the context of adaptive control.
Physical consistency is enforced by appropriately selecting $\psi$, such as the log-det function for positive-definite matrices or the log function for bounded parameters.
Another useful property of the Bregman divergence, eloquently shown in \cite{boffi2019higher}, is enforcing sparsity in $\hat{a}$ through appropriate selection of $\psi$.
\cref{theorem:adaptive} shows how the controller in \cref{eq:robust_controller} can be extended to uncertain systems by adding parameter adaptation that enforces physical consistency.

\begin{figure*}[t!]
\vskip 0.05in
    \begin{subfigure}{0.64\columnwidth}
         \centering
         \includegraphics[width=1\textwidth]{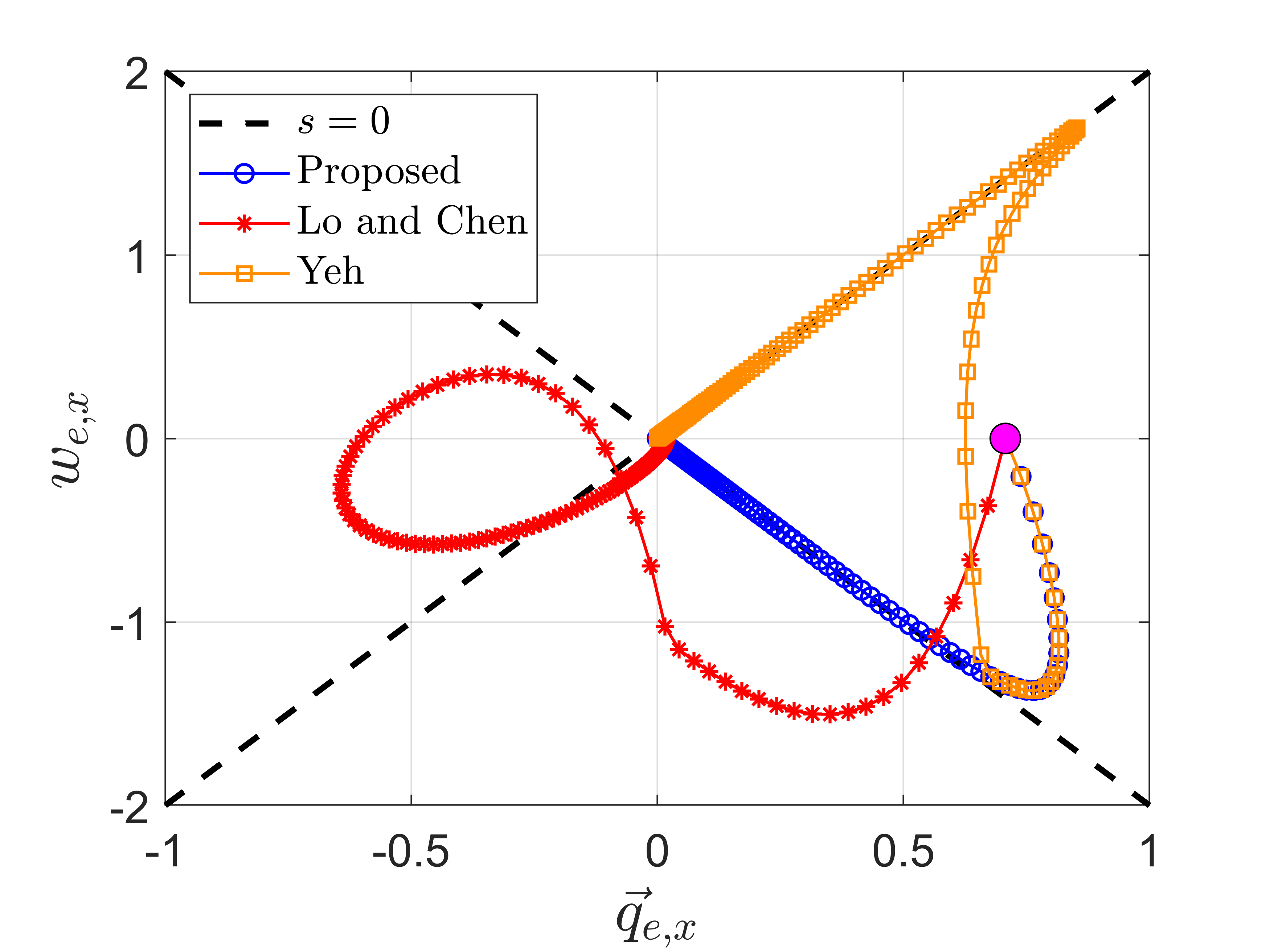}
         \caption{Tracking error along body $x$-axis.}
         \label{fig:sx}
     \end{subfigure}
     \hfill{}
     \begin{subfigure}{0.64\columnwidth}
         \centering
         \includegraphics[width=1\textwidth]{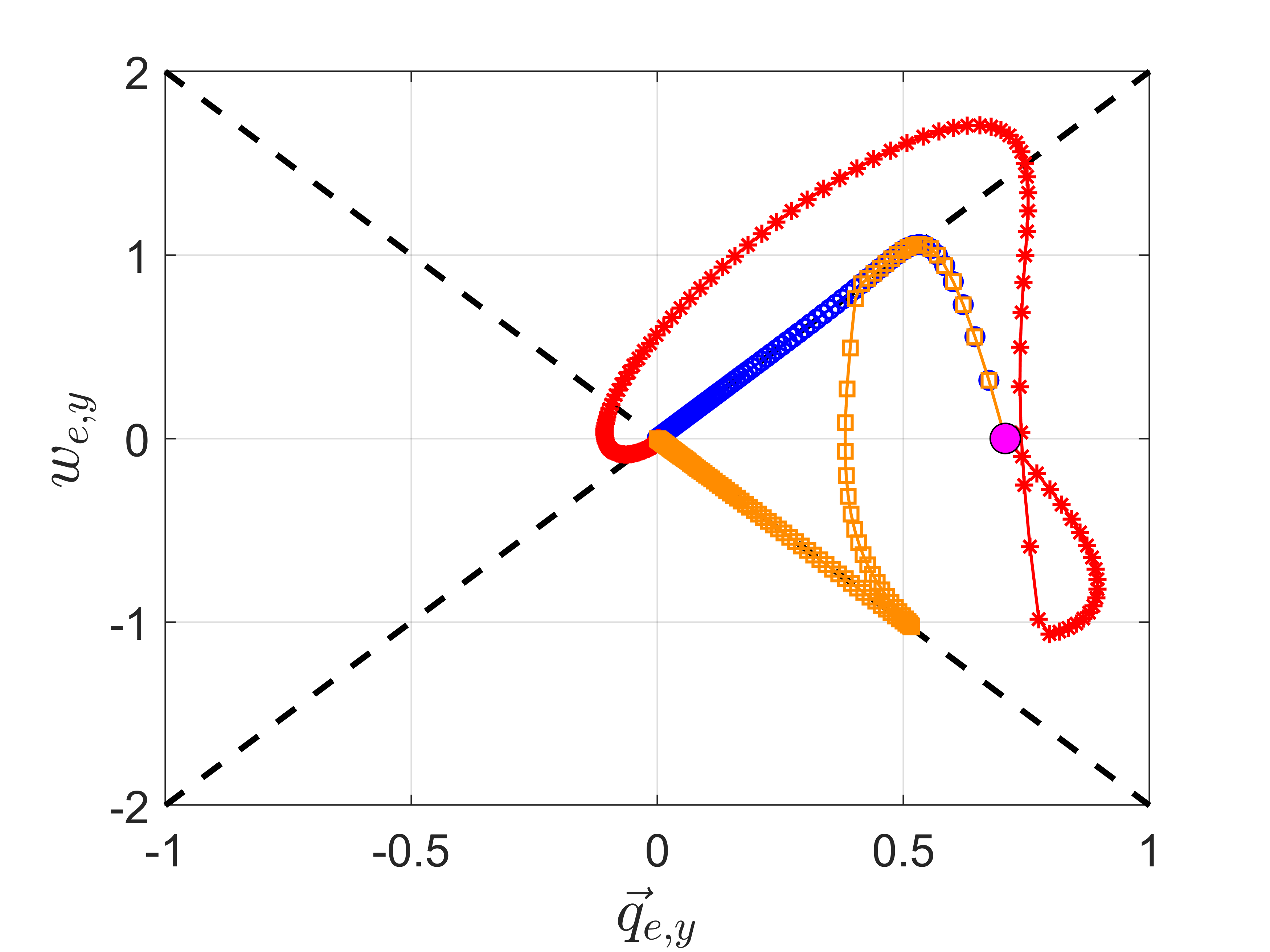}
         \caption{Tracking error along body $y$-axis.}
         \label{fig:sy}
     \end{subfigure}
     \hfill{}
     \begin{subfigure}{0.64\columnwidth}
         \centering
         \includegraphics[width=1\textwidth]{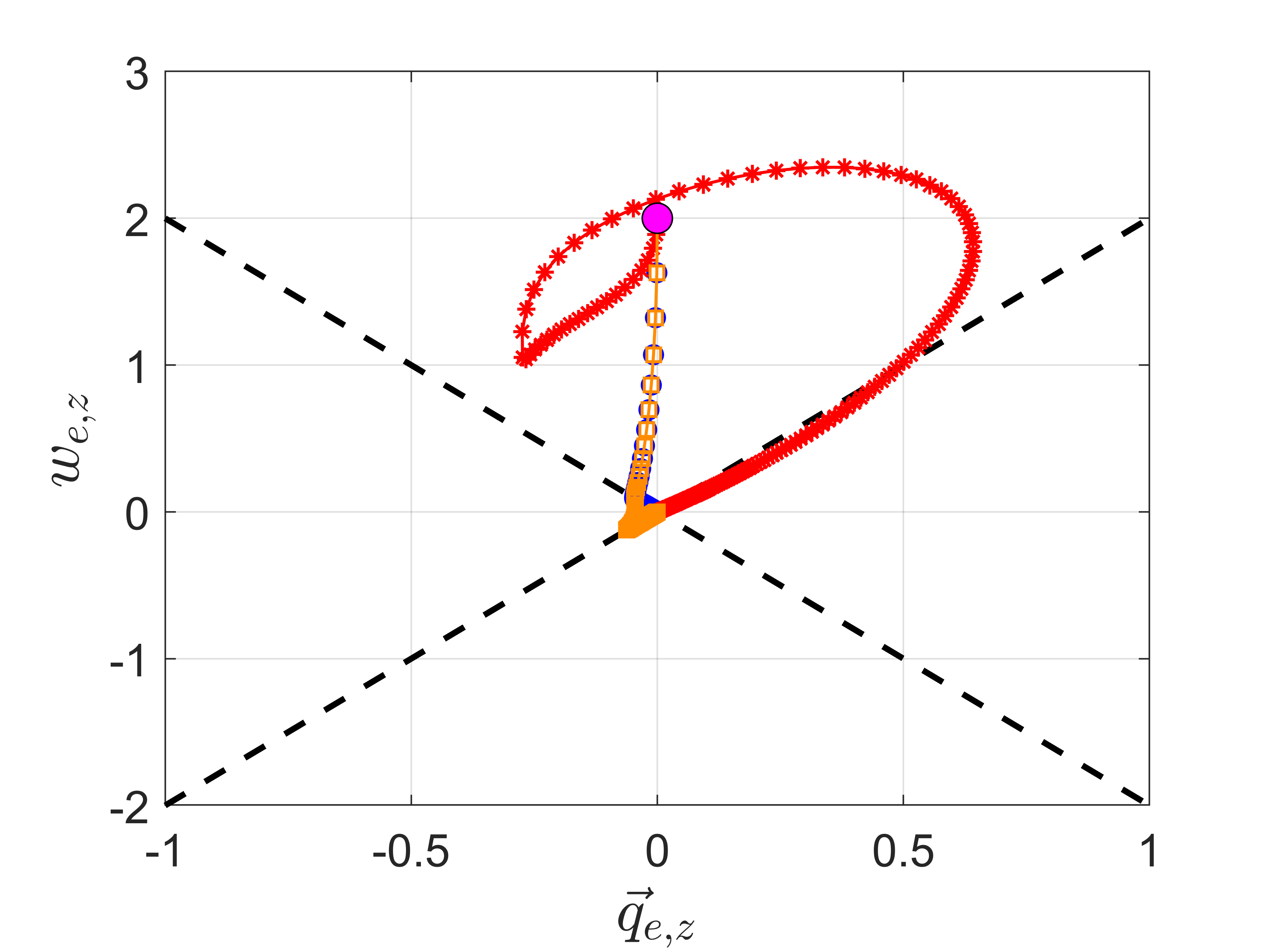}
         \caption{Tracking error along body $z$-axis.}
         \label{fig:sz}
     \end{subfigure}
     \caption{Comparison of tracking error with $q_d = (0.707,0,-0.707,0)$ and $q=(0,1,0,0)$ for a nonlinear PD controller with the proposed sliding variable and those in \cite{lo1995smooth,yeh2010sliding}. The pink circle represents the initial error. The quaternion flips sign midway through maneuver. The tracking error with the proposed sliding variable (blue) converges to one of the two possible sliding manifolds (black). The controller from \cite{lo1995smooth} exhibits undesirable convergence because quaternions are treated as Euclidean vector. The controller in \cite{yeh2010sliding} is similar to the proposed but exhibits unwinding (switches to different sliding manifold).}
    \label{fig:flip_results}
    \vskip -0.2in
\end{figure*}

\begin{theorem}
\label{theorem:adaptive}
Consider system \cref{eq:dynamics} subject to Assumptions~\ref{assumption:J} an \ref{assumption:linear_params}.
The error quaternion asymptotically converges to the identity quaternion with the nonlinear PD controller
\begin{equation}
\label{eq:pd_adpative}
        M_b = Y(q,\omega,\dot{\omega}_r)^\top\hat{a} - K s
\end{equation}
and adaptation law
\begin{equation}
\label{eq:pd_adaptive_law}
        \dot{\hat{a}} = -\left(\nabla^2\psi(\hat{a})\right)^{-1} Y(q,\omega,\dot{\omega}_r) s
\end{equation}
where ${\omega}_r=\omega_d-\lambda\mathrm{sgn}_+(q_e^\circ)\dot{\vec{q}}_e$ and $\psi$ is a continuously differentiable convex function.
\end{theorem}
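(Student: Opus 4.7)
The plan is to adapt the classical Slotine--Li composite Lyapunov argument to the present setting, replacing the usual quadratic parameter-error term by the Bregman divergence so that physical consistency is automatically enforced during adaptation. Introduce the parameter error $\tilde a := \hat a - a$ and the candidate Lyapunov function
\[
V \;=\; \tfrac{1}{2}\, s^\top J s \;+\; \mathrm{d}_\psi(a\,\|\,\hat a).
\]
Both summands are nonnegative: the first by positive-definiteness of $J$, the second by strict convexity of $\psi$ in the definition \cref{eq:bregman} of the Bregman divergence.

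First, I would use Assumption~\ref{assumption:linear_params} together with $J\dot s = J(\dot\omega-\dot\omega_r)$ to rewrite the closed-loop sliding dynamics under \cref{eq:pd_adpative} in regressor-error form,
\[
J\dot s \;=\; Y(q,\omega,\dot\omega_r)^\top \tilde a \;-\; K s
\]
(up to the sign convention of the assumption). This is the central algebraic identity: the certainty-equivalence control $Y^\top \hat a$ cancels everything except the term proportional to the parameter error $\tilde a$, leaving the stable $-Ks$ correction.

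Next, differentiating $V$ along the closed-loop flow produces two contributions. The kinetic piece gives $s^\top J\dot s = s^\top Y^\top \tilde a - s^\top K s$. For the Bregman term, the derivative formula \cref{eq:bregman_derivative} yields $\dot{\mathrm d}_\psi(a\,\|\,\hat a) = \tilde a^\top \nabla^2\psi(\hat a)\,\dot{\hat a}$; the adaptation law \cref{eq:pd_adaptive_law} is \emph{designed} precisely so that $\nabla^2\psi(\hat a)\,\dot{\hat a} = -Y s$, which makes this equal $-s^\top Y^\top \tilde a$ and exactly cancels the indefinite cross term. Thus
\[
\dot V \;=\; -\,s^\top K s \;\le\; 0.
\]
From $\dot V \le 0$ one obtains boundedness of $s$ and of $\hat a$, together with $\int_0^\infty s^\top K s\,dt < \infty$. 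To upgrade this to $s(t)\to 0$ via Barbalat's lemma, I would verify uniform continuity of $s$, which follows from boundedness of $\omega$, $q_e$, $\hat a$, and of the desired-trajectory signals entering $\dot\omega_r$. Finally, \cref{proposition:qe} converts $s\to 0$ into convergence of the error quaternion $q_e$ to the identity, yielding the stated asymptotic convergence.

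The main obstacle I anticipate is the Barbalat step, specifically verifying uniform continuity of $s$ across the $\mathrm{sgn}_+$ discontinuity in $\omega_r$. This is benign because, as shown in the proof of \cref{proposition:qe}, any trajectory crosses $\{q_e^\circ = 0\}$ transversally ($\dot q_e^\circ > 0$ there), so $s$ can only exhibit isolated finite jumps along any bounded interval and the square-integrability conclusion is unaffected. A secondary subtlety is ensuring $\nabla^2 \psi(\hat a)$ remains invertible along trajectories; this is inherited from the strict convexity of $\psi$ on the chosen closed convex set (for instance, the log-det barrier on positive-definite inertia tensors keeps $\hat J$ strictly interior, so $\nabla^2\psi(\hat a)\succ 0$ throughout).
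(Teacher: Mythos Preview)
Your proposal is correct and follows essentially the same route as the paper: the same Lyapunov candidate (up to an immaterial factor of $\tfrac12$), the same cancellation via the Bregman-derivative identity \cref{eq:bregman_derivative} and the adaptation law, the same $\dot V = -s^\top K s$, and the same appeal to Barbalat followed by \cref{proposition:qe}. Your discussion of the $\mathrm{sgn}_+$ discontinuity and the invertibility of $\nabla^2\psi(\hat a)$ actually goes beyond what the paper's proof spells out; the paper simply asserts that $\dot s$ is bounded when $\dot\omega_d$ is.
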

\begin{proof}
Consider the candidate Lyapunov function 
\begin{equation}
    V =  s^\top J s + 2\mathrm{d}_{\psi}\left(a ~|| ~\hat{a}\right). 
\end{equation}
Differentiating along the $s$ dynamics and utilizing \cref{eq:pd_adpative,eq:bregman_derivative},
\begin{equation*}
\begin{aligned}
    \dot{V} = &~ 2s^\top \left[ Y(q,\omega,\dot{\omega}_r)^\top \tilde{a} - K s  \right] + 2 \tilde{a}^\top \nabla^2 \psi(\hat{a}) \dot{\hat{a}}.
\end{aligned}
\end{equation*}
With the adaptation law \cref{eq:pd_adaptive_law} $\dot{V} = - 2 \sum_{i=1}^3 k_i s_i^2 \leq 0$.
It is easy to verify that $\dot{s}$ is bounded under the condition that the desired angular acceleration $\dot{\omega}_d$ is bounded.
By Barbalat's lemma, $s$ approaches zero asymptotically.
Therefore, by \cref{proposition:qe}, the error quaternion $q_e$ also tends to the identity quaternion as desired.
\end{proof}


\begin{remark}
The adaptation law \cref{eq:pd_adaptive_law} can be immediately used with the robust sliding controller \cref{eq:robust_controller} if adaptation is performed on a subset of unknown parameters.
This is advantageous because the robust gain \cref{eq:K_robust} has fewer uncertain parameters to compensate for resulting in lower control effort.
Proof omitted for brevity but similar to \cref{theorem:adaptive}.
\end{remark}

\section{RESULTS}
\label{sec:results}

The proposed quaternion sliding variable and the controllers presented in \cref{sec:control} were compared to other quaternion-based sliding controllers found in the literature. 
\cref{fig:flip_results} shows the evolution of the tracking error for the nonlinear PD sliding controller (blue) and those presented in \cite{lo1995smooth,yeh2010sliding} (red and orange) when the quaternion $q$ flips sign midway through an orientation pointing maneuver. 
Because of the $\mathrm{sgn}_+$ term in \cref{eq:s}, the sliding surface $\mathcal{S}$ contains \emph{two} manifolds that correspond to the desired error dynamics.
The tracking error with the nonlinear PD controller (blue) exhibits smooth, exponentially convergent  behavior despite the sign flip in $q$.
The controllers proposed in \cite{lo1995smooth,yeh2010sliding} exhibit poor performance as convergence is slow and unwinding (switching between the two  $\mathcal{S}$ manifolds) is observed.
Simulation parameters are $J=\mathrm{diag}(10,10,10)$, $f=[0,0,0]^\top$, $d=[0.2,-0.2,0.2]^\top$, $K=[5,5,5]$, and $\lambda = 2$.

\cref{fig:tracking_error} shows the two-norm of the vector part of the error quaternion when the nonlinear PD, robust, and adaptive controller are used to stabilize \cref{eq:dynamics} with uncertainty in the inertia tensor $ \mathcal{J} = \mathrm{diag}(3,2,4)$.
The boundary layer thickness for \cref{eq:robust_controller} is $\Phi = 0.1$.
The Bregman Divergence with $\psi(x) = \mathrm{log\,det}(x)$ was used to ensure $\hat{J}\succ0$ during model adaptation.
The tracking error with \cref{eq:robust_controller} converges faster than that with \cref{eq:pd_adpative} and \cref{eq:pd} but at the expense of larger control effort, shown in \cref{fig:control_effort}.
Model adaptation can significantly lower control effort while also improving convergence rate when compared to pure nonlinear PD control.


\begin{figure}[t!]
    \begin{subfigure}{.48\columnwidth}
         \centering
         \includegraphics[width=1\linewidth]{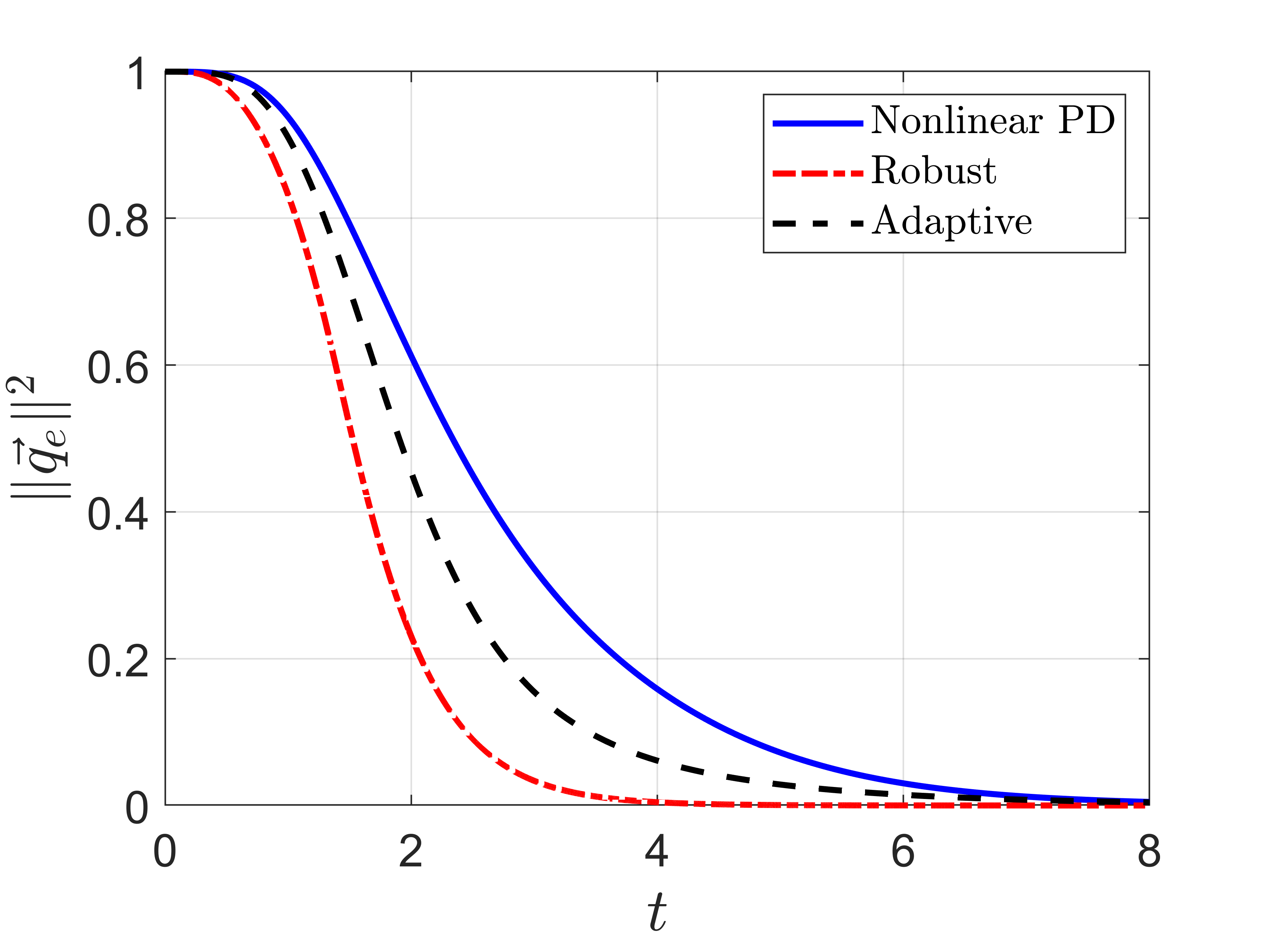}
         \caption{2-norm of $\vec{q}_e$.}
         \label{fig:tracking_error}
     \end{subfigure}
    \hspace{0.1em}
     \begin{subfigure}{.48\columnwidth}
         \centering
         \includegraphics[ width=1\linewidth]{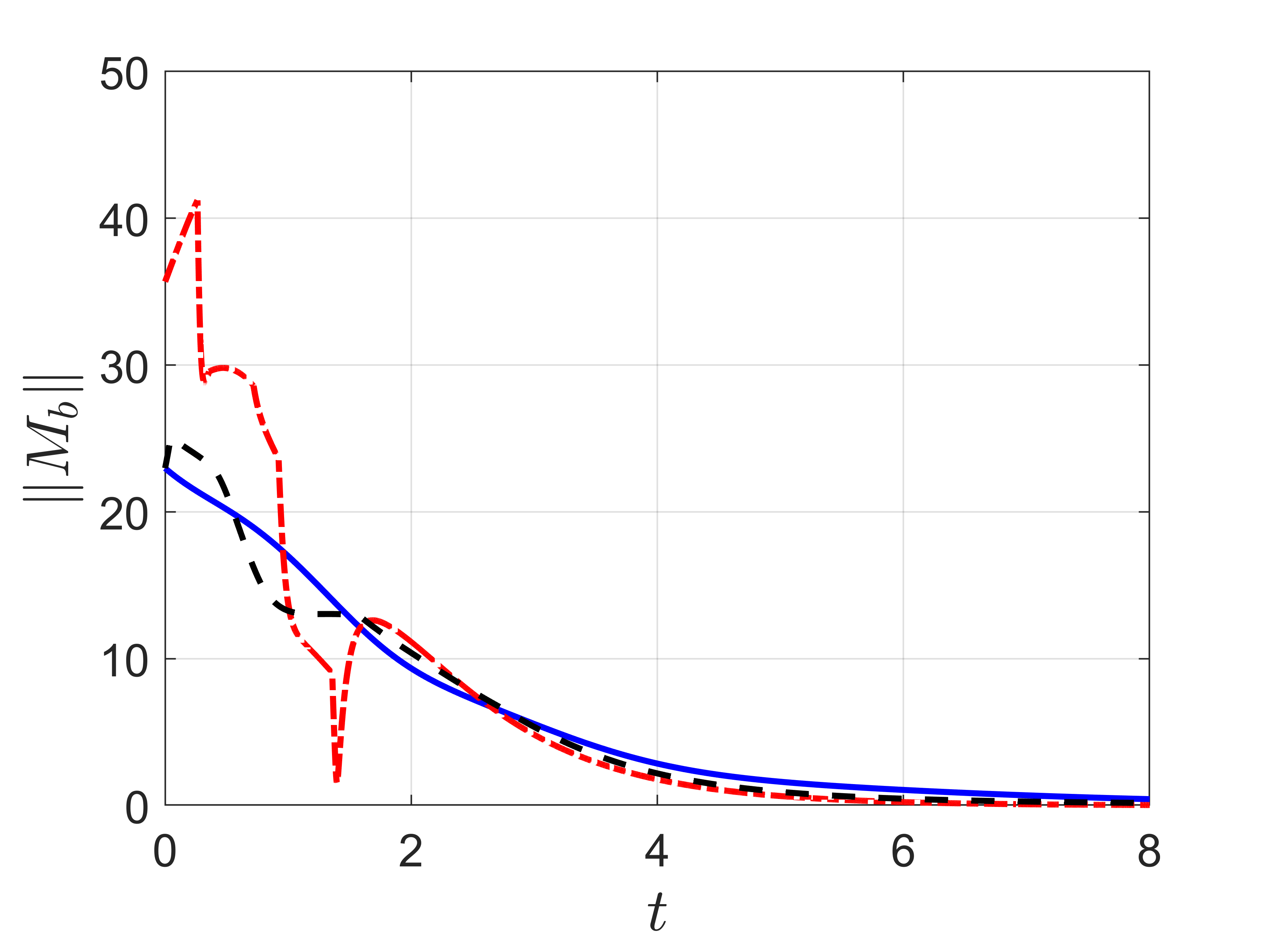}
         \caption{2-norm of control input.}
         \label{fig:control_effort}
     \end{subfigure}
    \hspace{0.1em}
    \label{fig:controller_results}
    \caption{Performance comparison of nonlinear PD, robust, and adaptive sliding control with uncertain inertia tensor. (a): The two norm of $\vec{q}_e$ tends toward zero despite model uncertainty for all controllers but convergence rates vary. Robust control outperforms nonlinear PD and adaptive. Model adaptation improves convergence rate of standalone nonlinear PD control. (b) The two norm of $M_b$ can become large for the robust controller due to model uncertainty compensation. Using an adaptive control approach lowers control effort at the expense of convergence rate.}
    \vskip -0.2in
\end{figure}


\section{DISCUSSION}
\label{sec:conclusion}
This work presented a quaternion-based sliding variable that represents exponentially convergent  quaternion error dynamics.
The two defining characteristics of the proposed sliding variable are that it 1) explicitly captures the topology of $\mathbb{S}^3$ and 2) contains a benign discontinuity -- which actually guarantees the control input is \emph{continuous} -- to eliminate the unwinding behavior.
Although several quaternion and $\mathrm{SO(3)}$ controllers have been proposed, the results presented here are easy to analyze, have guaranteed performance, and are straightforward to implement.
One research avenue of particular interest is the use of the proposed method for provably safe motion planning of aerial robots flying at high-speeds.
Aerodynamic effects can severely degrade the performance of standard attitude controllers making claims on guaranteed safety, i.e., not colliding with obstacles, very difficult to establish.
Moreover, the ability to understand how a planned trajectory impacts performance (through time-varying boundary layers), is an element not yet fully leveraged in planning.
This, and experimental testing on a physical system, is future work.


\balance
\bibliographystyle{ieeetr}
\bibliography{ref}


\end{document}